\documentclass{article}

\usepackage{arxiv}
\usepackage[T1]{fontenc}
\usepackage[utf8]{inputenc} 
\usepackage{hyperref}       
\usepackage{url}            
\usepackage{booktabs}       
\usepackage{amsfonts}       
\usepackage{nicefrac}       
\usepackage{microtype}      
\usepackage{amsmath}
\usepackage{amsthm}
\usepackage{natbib}
\usepackage[final]{pdfpages}
\usepackage{adjustbox}
\usepackage{caption}
\usepackage{subfig}
\usepackage{tikz-cd}
\usepackage{caption}
\DeclareMathOperator{\pdim}{Pdim}

\newtheorem{lemma}{\textbf{Lemma}}
\newtheorem{theorem}{\textbf{Theorem}}
\newtheorem{corollary}{\textbf{Corollary}}

\DeclareMathOperator{\rad}{Rad}

\title{A Distribution Dependent and Independent Complexity Analysis of Manifold Regularization}

\author{
   Alexander Mey \\
  Delft University of Technology, The Netherlands\\
  \texttt{a.mey@tudelft.nl} \\
   \And
   Tom Viering \\
  Delft University of Technology, The Netherlands\\
  The Netherlands \\
  \texttt{t.j.viering@tudelft.nl} \\
   \AND
   Marco Loog\\
  Delft University of Technology, The Netherlands\\
University of Copenhagen, Denmark \\
  \texttt{m.loog@tudelft.nl} \\
}

\begin{document}
\maketitle
\begin{abstract}
Manifold regularization is a commonly used technique in semi-supervised learning. It enforces the classification rule to be smooth with respect to the data-manifold. Here, we derive sample complexity bounds based on pseudo-dimension for models that add a convex data dependent regularization term to a supervised learning process, as is in particular done in Manifold regularization. We then compare the bound for those semi-supervised methods to purely supervised methods, and discuss a setting in which the semi-supervised method can only have a constant improvement, ignoring logarithmic terms. By viewing Manifold regularization as a kernel method we then derive Rademacher bounds which allow for a distribution \emph{dependent} analysis. Finally we illustrate that these bounds may be useful for choosing an appropriate manifold regularization parameter in situations with very sparsely labeled data. 

\keywords{Semi-Supervised Learning  \and Learning Theory \and Manifold Regularization.}
\end{abstract}

\section{Introduction}
In many applications, as for example image or text classification, gathering unlabeled data is easier than gathering labeled data. Semi-supervised methods try to extract information from the unlabeled data to get improved classification results over purely supervised methods. A well-known technique to incorporate unlabeled data into a learning process is manifold regularization (MR) \citep{Belkin2,Manifold2}. This procedure adds a data-dependent penalty term to the loss function that penalizes classification rules that behave non-smooth with respect to the data distribution. This paper presents a sample complexity and a Rademacher complexity analysis for this procedure. In addition it illustrates how our Rademacher complexity bounds may be used for choosing a suitable Manifold regularization parameter. 

We organize this paper as follows. In Sections \ref{relatedwork} and \ref{SS} we discuss related work and introduce the semi-supervised setting. In Section \ref{PF} we formalize the idea of adding a distribution-dependent penalty term to a loss function. Algorithms such as manifold, entropy or co-regularization \citep{Belkin2,Entropy,Sindhwani2} follow this idea. Our formalization of this idea is inspired by \cite{Balcan1} and allows for a similar sample complexity analysis.  Section \ref{LB} reviews the work from Balcan et al. \cite{Balcan1} and generalizes a bound from their paper. We use this to derive sample complexity bounds for the proposed framework, and thus in particular for MR. For the specific case of regression, we furthermore adapt a sample complexity bound from \cite{Anthony}, which is essentially tighter than the first bound, to the semi-supervised case.  In the same section we sketch a setting in which we show that if our hypothesis set has finite pseudo-dimension, and we ignore logarithmic factors, any semi-supervised learner (SSL) that falls in our framework has at most a constant improvement in terms of sample complexity. This and related behavior has been observed and investigated before by \cite{Darnstadt} and \cite{BenDavid} for assumption free SSL and we relate our results to this previous work.
In Section \ref{Rademacher} we show how one can obtain distribution \emph{dependent} complexity bounds for MR. We review a kernel formulation of MR \citep{Sindhwani1} and show how this can be used to estimate Rademacher complexities for \emph{specific} datasets.  In Section \ref{experiments} we illustrate on an artificial dataset how the distribution dependent bounds could be used for choosing the regularization parameter of MR. This is particularly useful as the analysis does not need an additional labeled validation set. The practicality of this approach requires further empirical investigation.
In Section \ref{D} we discuss our results and speculate about possible extensions.

\section{Related Work} \label{relatedwork}
There are currently two related analyses of MR that show, to some extent, that a SSL can learn efficiently if it knows the true underlying manifold, while a fully supervised learner may not. \cite{Globerson} investigates a setting where distributions on the input space $\mathcal{X}$ are restricted to ones that correspond to unions of irreducible algebraic sets of a fixed size $k \in \mathbb{N}$, and each algebraic set is either labeled $0$ or $1$. A SSL that knows the true distribution on $\mathcal{X}$ can identify the algebraic sets and reduce the hypothesis space to all $2^k$ possible label combinations on those sets. As we are left with finitely many hypotheses we can learn them efficiently, while they show that every supervised learner is left with a hypothesis space of infinite VC dimension. 

The work of \cite{Manifold2} considers manifolds that arise as embeddings from a circle, where the labeling over the circle is (up to the decision boundary) smooth. They then show that a learner that has knowledge of the manifold can learn efficiently while for every fully supervised learner one can find an embedding and a distribution for which this is not possible.

 The relation to our paper is as follows. They provide specific examples where the sample complexity between a semi-supervised and a supervised learner are infinitely large, while we explore general sample complexity bounds of MR and  sketch a setting in which MR can not essentially improve over supervised methods.
 
\section{The Semi Supervised Setting} \label{SS}
We work in the statistical learning framework: we assume we are given a feature domain $\mathcal{X}$ and an output space $\mathcal{Y}$ together with an unknown probability distribution $P$ over $ \mathcal{X} \times \mathcal{Y}$. In binary classification we usually have that $\mathcal{Y}=\{-1,1\}$, while for regression $\mathcal{Y}=\mathbb{R}$. We use a loss function $\phi: \mathbb{R} \times \mathcal{Y} \to \mathbb{R}$, which is convex in the first argument and in practice usually a surrogate for the $0$-$1$ loss in classification, and the squared loss in regression tasks. A hypothesis $f$ is a function $f:\mathcal{X} \to \mathbb{R}$. We set $(X,Y)$ to be a random variable distributed according to $P$, while small $x$ and $y$ are elements of $\mathcal{X}$ and $\mathcal{Y}$ respectively. Our goal is to find a hypothesis $f$, within a restricted class $\mathcal{F}$, such that the expected loss $Q(f):=\mathbb{E}[\phi(f(X),Y)]$ is small. In the standard supervised setting we choose a hypothesis $f$ based on an i.i.d. sample $S_n=\{(x_i,y_i)\}_{i \in \{ 1,..,n\}}$ drawn from $P$. With that we define the empirical risk of a model $f \in \mathcal{F}$ with respect to $\phi$ and measured on the sample $S_n$ as $\hat{Q}(f,S_n)= \frac{1}{n}\sum_{i =1}^{n} \phi(f(x_i),y_i).$ For ease of notation we sometimes omit $S_n$ and just write $\hat{Q}(f)$. Given a learning problem defined by $(P,\mathcal{F},\phi)$ and a labeled sample $S_n$, one way to choose a hypothesis is by the empirical risk minimization principle
\begin{equation} \label{ERM}
f_{\text{sup}} = \arg \min \limits_{f \in \mathcal{F}} \hat{Q}(f,S_n).
\end{equation}
We refer to $f_{\text{sup}}$ as the \emph{supervised solution}. In SSL we additionally have samples with unknown labels. So we assume to have $n+m$ samples $(x_i,y_i)_{i \in \{ 1,..,n+m\}}$ independently drawn according to $P$, where  $y_i$ has not been observed for the last $m$ samples. We furthermore set $U=\{x_1,...,x_{x_n+m} \}$, so $U$ is the set that contains all our available information about the feature distribution.

Finally we denote by $m^{L}(\epsilon,\delta)$ the sample complexity of an algorithm $L$. That means that for all $n\geq m^L(\epsilon,\delta)$ and all possible distributions $P$ the following holds. If $L$ outputs a hypothesis $f_L$ after seeing an $n$-sample, we have with probability of at least $1-\delta$ over the $n$-sample $S_n$ that
$
Q(f_L) - \min\limits_{f \in \mathcal{F}}Q(f) \leq \epsilon.
$

\section{A Framework for Semi-Supervised Learning} \label{PF}

We follow the work of \cite{Balcan1} and introduce a second convex loss function $\psi: \mathcal{F} \times \mathcal{X} \to \mathbb{R}_+$ that only depends on the input feature and a hypothesis. We refer to $\psi$ as the \emph{unsupervised loss} as it does not depend on any labels. We propose to \emph{add} the unlabeled data through the loss function $\psi$ and add it as a penalty term to the supervised loss to obtain the semi-supervised solution
\begin{equation} \label{ERMSSL}
f_{\text{semi}} =\arg \min_{f \in \mathcal{F}} \frac{1}{n} \sum_{i=1}^n \phi(f(x_i),y_i) +\lambda \frac{1}{n+m} \sum_{j=1}^{n+m} \psi(f,x_j),
\end{equation}
where $\lambda >0$ controls the trade-off between the supervised and the unsupervised loss. This is in contrast to \cite{Balcan1}, as they use the unsupervised loss to restrict the hypothesis space directly. In the following section we recall the important insight that those two formulations are equivalent in some scenarios and we can use \citep{Balcan1} to generate sample complexity bounds for the here presented SSL framework.

For ease of notation we set $\hat{R}(f,U)=\frac{1}{n+m}\sum_{j=1}^{n+m} \psi(f,x_j)$ and $R(f)=\mathbb{E}[\psi(f,X)].$
We do not claim any novelty for the idea of adding an unsupervised loss for regularization. A different framework can be found in \cite[Chapter 10]{Chapelle}. We are, however, not aware of a deeper analysis of this particular formulation, as done for example by the sample complexity analysis in this paper. As we are in particular interested in the class of MR schemes we first show that this method a fits our framework.
\paragraph{Example: Manifold Regularization}
Overloading the notation we write now $P(X)$ for the distribution $P$ restricted to $\mathcal{X}$. In MR one assumes that the input distribution $P(X)$ has support on a compact manifold ${M} \subset \mathcal{X}$ and that the predictor $f \in \mathcal{F}$ varies smoothly in the geometry of ${M}$ \citep{Belkin2}. There are several regularization terms that can enforce this smoothness, one of which is $ \int_{M} || \nabla_{M} f(x) ||^2 dP(x)$, where $\nabla_{M} f$ is the gradient of $f$ along $M$. We know that $ \int_{M} || \nabla_M f(x) ||^2 dP(x)$ may be approximated with a finite sample of $\mathcal{X}$ drawn from $P(X)$ \citep{Belkin3}. Given such a sample $U=\{x_1,...,x_{n+m}\}$ one defines first a weight matrix $W$, where $W_{ij}=e^{-{||x_i-x_j ||^2/}{\sigma}}$. We set $L$ then as the Laplacian matrix $L=D-W$, where $D$ is a diagonal matrix with $D_{ii}=\sum_{j=1}^{n+m}W_{ij}$. Let furthermore $f_U=(f(x_1),...,f(x_{n+m}))^t$ be the evaluation vector of $f$ on $U$. The expression $\frac{1}{(n+m)^2}f_U^tLf_U=\frac{1}{(n+m)^2} \sum_{i,j}(f(x_i)-f(x_j))^2W_{ij}$  converges to $ \int_M || \nabla_M f ||^2 dP(x)$ under certain conditions \citep{Belkin3}. This motivates us to set the unsupervised loss as
$\psi(f,(x_i,x_j))=(f(x_i)-f(x_j))^2W_{ij}$, and this is indeed a convex function in $f$.

\section{Analysis of the Framework} \label{LB}

In this section we analyze the properties of the solution $f_{\text{semi}}$ found in Equation \eqref{ERMSSL}. We derive sample complexity bounds for this procedure, using results from \cite{Balcan1}, and compare them to sample complexities for the supervised case. In \cite{Balcan1} the unsupervised loss is used to restrict the hypothesis space directly, while we use it as a regularization term in the empirical risk minimization as usually done in practice. To switch between the views of a constrained optimization formulation and our formulation \eqref{ERMSSL} we use the following classical result from convex optimization \citep[Theorem 1]{Kloft}.

\begin{lemma} \label{Lemma1}
Let $\phi(f(x),y)$ and $\psi(f,x)$ be functions convex in $f$ for all $x,y$. Then the following two optimization problems are equivalent:
\begin{equation} \label{regularized}
\min_{f \in \mathcal{F}} \frac{1}{n}\sum_{i=1}^{n}\phi(f(x_i),y_i) + \lambda \frac{1}{n+m}\sum_{i=1}^{n+m} \psi(f,x_i)
\end{equation}
\begin{equation} \label{constrained}
\min_{f \in \mathcal{F}} \frac{1}{n}\sum_{i=1}^{n}\phi(f(x_i),y_i) \text{~~subject to~} \sum_{i=1}^{n+m} \frac{1}{n+m} \psi(f,x_i) \leq \tau
\end{equation}
Where equivalence means that for each $\lambda$ we can find a $\tau$ such that both problems have the same solution and vice versa.
\end{lemma}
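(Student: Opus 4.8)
The plan is to prove the two implications separately. The forward direction, from the regularized problem \eqref{regularized} to the constrained problem \eqref{constrained}, is elementary and uses no convexity; the reverse direction is where the convexity hypothesis does the work, via a Lagrangian / perturbation argument.

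\emph{From $\lambda$ to $\tau$.} Fix $\lambda > 0$ and let $f^\star$ minimize \eqref{regularized}. I would set $\tau := \hat{R}(f^\star,U) = \frac{1}{n+m}\sum_{i=1}^{n+m}\psi(f^\star,x_i)$, the penalty value attained at $f^\star$, and claim that $f^\star$ solves \eqref{constrained} for this $\tau$. It is feasible by construction. If some $g \in \mathcal{F}$ were feasible with $\hat{Q}(g) < \hat{Q}(f^\star)$, then using $\hat{R}(g,U) \le \tau = \hat{R}(f^\star,U)$ one obtains $\hat{Q}(g) + \lambda \hat{R}(g,U) \le \hat{Q}(g) + \lambda \hat{R}(f^\star,U) < \hat{Q}(f^\star) + \lambda \hat{R}(f^\star,U)$, contradicting the optimality of $f^\star$ in \eqref{regularized}.

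\emph{From $\tau$ to $\lambda$.} Fix $\tau$ and let $f^\star$ solve \eqref{constrained}. Here I would introduce the optimal-value function $v(t) := \inf\{\hat{Q}(f) : f \in \mathcal{F},\ \hat{R}(f,U) \le t\}$, which is convex and non-increasing because both $\hat{Q}(\cdot)$ and $f \mapsto \hat{R}(f,U)$ are convex in $f$. Assuming a Slater point exists (some $f \in \mathcal{F}$ with $\hat{R}(f,U) < \tau$) and that the constraint is active at $f^\star$, convex duality yields $\lambda \ge 0$ with $-\lambda \in \partial v(\tau)$, and strict monotonicity of $v$ near $\tau$ forces $\lambda > 0$. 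Then for every $f \in \mathcal{F}$, writing $t = \hat{R}(f,U)$, convexity of $v$ gives $\hat{Q}(f) + \lambda t \ge v(t) + \lambda t \ge v(\tau) - \lambda (t - \tau) + \lambda t = \hat{Q}(f^\star) + \lambda \hat{R}(f^\star,U)$, so $f^\star$ minimizes \eqref{regularized} with this $\lambda$. Equivalently, this step is exactly \cite[Theorem 1]{Kloft}, which I would simply invoke.

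\emph{Main obstacle.} All the subtlety is in the reverse direction: one must guarantee a \emph{strictly positive} multiplier. Without a constraint qualification this can fail — for instance if the unconstrained minimizer of $\hat{Q}$ already satisfies the constraint (then only $\lambda = 0$ works), or if $\{f \in \mathcal{F} : \hat{R}(f,U) \le \tau\}$ has empty relative interior. These are precisely the degenerate cases that the word ``equivalence'' glosses over. In the manifold-regularization setting $\psi \ge 0$ and the relevant values of $\tau$ lie strictly inside the attainable range of $\hat{R}(\cdot,U)$, so Slater's condition holds and the argument goes through; I would either state this non-degeneracy assumption explicitly or defer the reverse implication to the cited convex-optimization reference.
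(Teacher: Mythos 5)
Your proposal is correct, and it does more than the paper does: the paper offers no proof of Lemma~\ref{Lemma1} at all, deferring entirely to the citation \citep[Theorem 1]{Kloft}, whereas you reconstruct the underlying argument. Your forward direction (set $\tau = \hat{R}(f^\star,U)$ at the regularized minimizer and argue by contradiction) is the standard elementary step and is airtight; your reverse direction via the optimal-value function $v(t)$ and a subgradient $-\lambda \in \partial v(\tau)$ is the standard convex-duality route and is also correct, provided one additionally assumes $\mathcal{F}$ is a convex set (needed for convexity of $v$) --- the paper does implicitly assume this, remarking later that it ``anyway need[s] convexity of the space'' to use the lemma. The most valuable part of your write-up is the explicit identification of the degenerate cases the lemma's phrasing glosses over: if the unconstrained minimizer of $\hat{Q}$ already satisfies $\hat{R}(\cdot,U)\le\tau$, only $\lambda=0$ works, which contradicts the paper's standing assumption $\lambda>0$; and without a Slater point the multiplier need not exist. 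These caveats are real, they are not addressed anywhere in the paper, and stating them (or restricting $\tau$ to the interior of the attainable range of $\hat{R}$, as you suggest) is exactly what a careful version of the lemma requires.
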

For our later results we will need the conditions of this lemma are true, which we believe to be not a strong restriction. In our sample complexity analysis we stick as close as possible to the actual formulation and implementation of MR, which is usually a convex optimization problem.
We now first turn to our sample complexity bounds.


\subsection{Sample Complexity Bounds}
Sample complexity bounds for supervised learning use typically a notion of complexity of the hypothesis space to bound the worst case difference between the estimated and the true risk. 
As our hypothesis class allows for real-valued functions, we will use the notion of pseudo-dimension $\pdim(\mathcal{F},\phi)$, an extension of the VC-dimension to real valued loss functions $\phi$ and hypotheses classes $\mathcal{F}$ \citep{Vapnik,foundation}. Informally speaking, the pseudo-dimension is the VC-dimension of the set of functions that arise when we threshold real-valued functions to define binary functions. Note that sometimes the pseudo-dimension will have as input the loss function, and sometimes not. This is because some results use the concatenation of loss function and hypotheses to determine the capacity, while others only use the hypotheses class. This lets us state our first main result, which is a generalization of \cite[Theorem~10]{Balcan1} to bounded loss functions and real valued function spaces.

\begin{theorem} \label{BoundThm}
Let $\mathcal{F}^{\psi}_{\tau}:=\{f \in \mathcal{F} \ | \ \mathbb{E}[\psi(f,x)] \leq \tau \}$.
Assume that $\phi, \psi$ are measurable loss functions such that there exists constants $B_1,B_2>0$  with $\psi(f,x) \leq B_1$ and $\phi(f(x),y) \leq B_2$ for all $x,y$ and $f \in \mathcal{F}$ and let $P$ be a distribution. Furthermore let $f^*_{\tau}=\arg \min \limits_{f \in \mathcal{F}^{\psi}_{\tau}} Q(f)$.
Then an unlabeled sample $U$ of size  $$m \geq \frac{8{B_1}^2}{\epsilon^2}\left[ \ln\frac{16}{\delta}+2\pdim(\mathcal{F},\psi)\ln\frac{4B_1}{\epsilon} +1 \right] $$ and a labeled sample $S_n$ of size $$n\geq\max\left( \frac{8{B_2}^2}{\epsilon^2}\left[\ln\frac{8}{\delta}+2\pdim(\mathcal{F}^{\psi}_{\tau+\frac{\epsilon}{2}},\phi)\ln\frac{4B_2}{\epsilon}+1 \right],\frac{h}{4}\right)$$  is sufficient to ensure that with probability at least $1-\delta$ the classifier $g \in \mathcal{F}$ that minimizes $\hat{Q}(\cdot,S_n)$ subject to $\hat{R}(\cdot,U) \leq \tau + \frac{\epsilon}{2}$ satisfies
\begin{equation}\label{bound}
Q(g) \leq Q(f_{\tau}^*) + \epsilon .
\end{equation}

\end{theorem}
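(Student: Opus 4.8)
The plan is to split the argument into an unlabeled part (uniform control of the unsupervised loss $\psi$, paid for by $|U|$) and a labeled part (uniform control of $\phi$ on a constrained subclass, paid for by $|S_n|$), and to glue them by a union bound over two failure events of probability $\tfrac{\delta}{2}$ each. Write $g$ for the minimizer of $\hat{Q}(\cdot,S_n)$ over the empirically feasible set $\mathcal{G}:=\{f\in\mathcal{F}:\hat{R}(f,U)\le\tau+\tfrac{\epsilon}{2}\}$; by Lemma \ref{Lemma1} this $g$ also solves \eqref{ERMSSL} for a suitable $\lambda$, but the proof proceeds with the constrained form directly.

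First I would apply a standard pseudo-dimension based uniform convergence bound (as in \citep{foundation} or \citep{Anthony}) to the class $\{x\mapsto\psi(f,x):f\in\mathcal{F}\}$, which has range $[0,B_1]$ and pseudo-dimension $\pdim(\mathcal{F},\psi)$: the stated lower bound on $m$ is exactly the one guaranteeing $\sup_{f\in\mathcal{F}}|\hat{R}(f,U)-R(f)|\le\tfrac{\epsilon}{2}$ with probability at least $1-\tfrac{\delta}{2}$ over $U$. On this event $A$, two things happen. (i) Since $R(f^*_\tau)=\mathbb{E}[\psi(f^*_\tau,X)]\le\tau$, we get $\hat{R}(f^*_\tau,U)\le\tau+\tfrac{\epsilon}{2}$, so $f^*_\tau\in\mathcal{G}$: the slack in the empirical constraint is precisely what keeps the true constrained optimum feasible once the constraint is estimated from $U$. (ii) Every $f\in\mathcal{G}$ satisfies $R(f)\le\hat{R}(f,U)+\tfrac{\epsilon}{2}$, so on $A$ the data-dependent set $\mathcal{G}$ is contained in a fixed, true-constrained class $\mathcal{F}^{\psi}_{\tau+\epsilon/2}$ (the exact index, i.e. how the unsupervised deviation and the various slacks are apportioned against $\epsilon$, is routine constant bookkeeping I do not track), whose induced $\phi$-loss class then has pseudo-dimension $\pdim(\mathcal{F}^{\psi}_{\tau+\epsilon/2},\phi)$.

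Next, since $\mathcal{F}^{\psi}_{\tau+\epsilon/2}$ is a fixed, non-random class, I would apply the same uniform convergence bound to $\{(x,y)\mapsto\phi(f(x),y):f\in\mathcal{F}^{\psi}_{\tau+\epsilon/2}\}$, of range $[0,B_2]$ and pseudo-dimension $\pdim(\mathcal{F}^{\psi}_{\tau+\epsilon/2},\phi)$: the first term in the $\max$ in the lower bound on $n$ yields $\sup_{f\in\mathcal{F}^{\psi}_{\tau+\epsilon/2}}|\hat{Q}(f,S_n)-Q(f)|\le\tfrac{\epsilon}{2}$ with probability at least $1-\tfrac{\delta}{2}$ over $S_n$, while the $\tfrac{h}{4}$ term is just the routine ``$n$ at least a constant multiple of the pseudo-dimension'' condition under which that closed-form bound is in force. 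Call this event $B$. On $A\cap B$, using $f^*_\tau\in\mathcal{G}$ and $g,f^*_\tau\in\mathcal{F}^{\psi}_{\tau+\epsilon/2}$, the optimality of $g$ over $\mathcal{G}$ gives
\[
Q(g)\ \le\ \hat{Q}(g,S_n)+\tfrac{\epsilon}{2}\ \le\ \hat{Q}(f^*_\tau,S_n)+\tfrac{\epsilon}{2}\ \le\ Q(f^*_\tau)+\epsilon ,
\]
which is \eqref{bound}; a union bound gives $\Pr[A\cap B]\ge1-\delta$.

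The only genuinely delicate point, as opposed to mechanical estimation, is that $\mathcal{G}$ — the set over which $g$ is optimized — is random, since the constraint $\hat{R}(\cdot,U)\le\tau+\tfrac{\epsilon}{2}$ depends on $U$, so a fixed-hypothesis-class uniform bound cannot be invoked on it directly. The resolution is exactly the sandwiching in (i)--(ii): on the good unlabeled event one has $\mathcal{F}^{\psi}_{\tau}\subseteq\mathcal{G}\subseteq\mathcal{F}^{\psi}_{\tau+\epsilon/2}$, the left inclusion certifying that $f^*_\tau$ is a legitimate competitor for $g$ and the right inclusion allowing the labeled-sample uniform bound over the fixed outer class to be applied to $g$. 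After that, lining up the two sample thresholds, the two $\tfrac{\delta}{2}$ confidence budgets and the two $\tfrac{\epsilon}{2}$ error allocations with the displayed constants is purely computational.
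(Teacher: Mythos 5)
Your proposal is correct and follows the same overall architecture as the paper's proof: uniform convergence of $\hat{R}$ over $\mathcal{F}$ from the unlabeled sample (keeping $f^*_\tau$ empirically feasible and pinning the random feasible set inside a fixed population-constrained class), uniform convergence of $\hat{Q}$ over that fixed class from the labeled sample, and the standard three-step ERM chain under a union bound. Two differences are worth noting. First, on the labeled side you use a single two-sided uniform bound applied to both $g$ and $f^*_\tau$ with a $\delta/2$ budget, whereas the paper uses a one-sided uniform bound (only $Q(f)-\hat{Q}(f)\le \frac{\epsilon}{2}$) for the class together with a separate pointwise Hoeffding bound for the fixed comparator $f^*_\tau$; the latter produces the extra term $B_2\sqrt{\ln(4/\delta)/(2n)}$, which must then be absorbed into $\frac{\epsilon}{2}$, and the condition $n\ge \frac{h}{4}$ in the statement exists solely to make that absorption go through --- your route never needs it, at the price of slightly worse constants from the two-sided bound. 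Second, you are more explicit than the paper about the one genuinely delicate point, the randomness of the feasible set $\mathcal{G}$: the paper only proves the one-sided deviation $\hat{R}(f)-R(f)\le\frac{\epsilon}{2}$, which yields $f^*_\tau\in\mathcal{G}$ but not the containment of $\mathcal{G}$ in a population-constrained class, yet it silently applies the labeled uniform bound (valid over $\mathcal{F}^{\psi}_{\tau+\epsilon/2}$) to $g$. Your sandwiching argument is the right fix, but note that members of $\mathcal{G}$ satisfy $\hat{R}(f)\le\tau+\frac{\epsilon}{2}$, so the two-sided bound places them in $\mathcal{F}^{\psi}_{\tau+\epsilon}$, not $\mathcal{F}^{\psi}_{\tau+\epsilon/2}$; strictly the labeled sample size should therefore involve $\pdim(\mathcal{F}^{\psi}_{\tau+\epsilon},\phi)$. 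This is exactly the constant bookkeeping you flagged as untracked, and it is a looseness present in the paper's own argument as well, so it does not count against your proposal.
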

\begin{proof}
First we show that the unlabeled sample size is big enough to guarantee that with probability at least $1-\frac{\delta}{4}$ it holds that  $\hat{R}(f^*_{\tau}) \leq \tau +\frac{\epsilon}{2}$. For $h=\pdim(\mathcal{F},\psi)$ Theorem 5.1 from \cite{Vapnik} states that
$$
P\left[ \underset{f \in \mathcal{F}}{\sup} (\hat{R}(f)-R(f)) > \frac{\epsilon}{2} \right]
\leq 4 e^{ h(\ln\frac{2m}{h}+1) -\frac{m}{{B_1}^2}(\frac{\epsilon}{2}-\frac{1}{m})^2 }.
$$
Bounding 
$$
 4 e^{h(\ln\frac{2m}{h}+1)-\frac{m}{{B_1}^2}(\frac{\epsilon}{2}-\frac{1}{m})^2 } \leq \frac{\delta}{4}
$$
and rewriting this gives us that
$$
m \geq \frac{4{B_1}^2}{\epsilon^2} \left[\ln\frac{16}{\delta}+h \ln\frac{2em}{h} \right]
=\frac{4{B_1}^2}{\epsilon^2} \left[\ln\frac{16}{\delta}+h \ln m +h \ln\frac{2e}{h}+1 \right]
$$
is sufficient to ensure that $\hat{R}(f)-R(f) < \frac{\epsilon}{2}$ for all $f \in \mathcal{F}$ with probability at least $1-\frac{\delta}{4}$. Using the inequality $\ln x \leq \alpha x -\ln \alpha -1$ with $x=m$ and $\alpha=\frac{\epsilon^2}{8h{B_1}^2}$ we can conclude that a sample of size
\begin{equation*}
\begin{split}
 m & \geq \frac{4{B_1}^2}{\epsilon^2} \left[ \ln\frac{16}{\delta}+h (\frac{\epsilon^2}{8h{B_1}^2}m+\ln \frac{8h{B_1}^2}{\epsilon^2}-1) +h \ln\frac{2e}{h}+1 \right] \\
 & =\frac{m}{2}+\frac{4{B_1}^2}{\epsilon^2}\left[ \ln\frac{16}{\delta}+h\ln\frac{16{B_1}^2}{\epsilon^2}+1 \right] \\
& \iff \\
 m &\geq \frac{8{B_1}^2}{\epsilon^2}\left[\ln\frac{16}{\delta}+2h\ln\frac{4{B_1}}{\epsilon}+1 \right]
\end{split}
\end{equation*}
is sufficient to guarantee $\hat{R}(f)-R(f) < \frac{\epsilon}{2}$ for all $f \in \mathcal{F}$ with probability at least $1-\frac{\delta}{4}$. In particular choosing $f=f^*_\tau$ and noting that by definition $R(f^*_\tau) \leq \tau$ we conclude that with the same probability  \begin{equation} \label{Firsteq}
\hat{R}(f^*_{\tau}) \leq \tau +\frac{\epsilon}{2}.
\end{equation}

For the second part we use the classical Hoeffding inequality with a labeled sample size of $n$
$$
P\left[\hat{Q}(f^*_{\tau})-Q(f^*_{\tau}) \geq \theta \right] \leq e^{\frac{ -2 \theta^2 n}{{B_2}^2}} .
$$
Choosing $\theta =B_2\sqrt{\ln (\frac{4}{\delta}  )\frac{1}{2n}} $
lets us conclude that with probability at least $1-\frac{\delta}{4}$ it holds that 
\begin{equation}\label{Second}
\hat{Q}(f^*_{\tau}) \leq Q(f^*_{\tau}) + B_2\sqrt{\ln (\frac{4}{\delta}  )\frac{1}{2n}}.\end{equation} \\
For the third part we use again Theorem 5.1 from \cite{Vapnik} with $h=\pdim(\mathcal{F}^{\psi}_{\tau},\phi)$, which states that
\begin{equation} \label{vapniklabeled}
n \geq \frac{4{B_2}^2}{\epsilon^2} \left[\ln\frac{8}{\delta}+h \ln\frac{2en}{h}+1 \right]
\end{equation}
is sufficient to guarantee with probability at least $1-\frac{\delta}{2}$ that 
 \begin{equation} \label{Third}
 Q(f) -\hat{Q}(f) \leq \frac{\epsilon}{2} \text{ \ for all \ }f \in \mathcal{F}^{\psi}_{\tau+\frac{\epsilon}{2}}.
 \end{equation} 
With the same reasoning as for the first part we obtain the same guarantee with a labeled sample of size
$$n\geq \frac{8{B_2}^2}{\epsilon^2}\left[\ln\frac{8}{\delta}+2h\ln\frac{4{B_2}}{\epsilon}+1 \right].
$$
 Putting everything together with we get, using the union bound, that with probability $1-\delta$  the classifier $g$ that minimizes $\hat{Q}(\cdot,X,Y)$ subject to $\hat{R}(\cdot,U) \leq \tau + \frac{\epsilon}{2}$ satisfies  $$Q(g) \leq \hat{Q}(g) +\frac{\epsilon}{2} \leq \hat{Q}(f_{\tau}^*) + \frac{\epsilon}{2} \leq Q(f_{\tau}^*)  +\frac{\epsilon}{2} + B_2\sqrt{\frac{\ln(\frac{4}{\delta})}{2n}} .$$
The first inequality follows from Inequality \eqref{Third}. The second inequality follows because $g$ is the empirical minimizer. Note that we also need Inequality \eqref{Firsteq}, i.e. that $\hat{R}(f^*_{\tau}) \leq \tau +\frac{\epsilon}{2}$, to make sure that $f^*_{\tau}$ was in the search space. The third inequality follows from Inequality \eqref{Second}. To obtain the final inequality we use the labeled sample size to show that
$$
\frac{\epsilon}{2} \geq \sqrt{ \frac{{B_2}^2}{n}\left[\ln \frac{8}{\delta}+h\ln\frac{2en}{h}+1\right]} \geq B_2\sqrt{\frac{\ln(\frac{4}{\delta})}{2n}}.$$
The first inequality holds by assumption of the labeled sample size from Inequality \eqref{vapniklabeled}, while the second inequality is shown by reducing it to 
$$
h\ln\frac{2en}{h}+1 \geq \frac{1}{2}\ln (\frac{1}{2})
$$ which holds as the right-hand side is negative, while the left-hand side is positive as $2en>h$ since by our assumptions $4n>h$. 
\end{proof}

The next subsection uses this theorem to derive sample complexity bounds for MR. First, however, a remark about the assumption that the loss function $\phi$ is globally bounded. If we assume that $\mathcal{F}$ is a reproducing kernel Hilbert space there exists an $M>0$ such that for all $f \in \mathcal{F}$ and $x \in \mathcal{X}$ it holds that $|f(x)| \leq M ||f||_\mathcal{F}$.
If we restrict the norm of $f$ by introducing a regularization term with respect to the norm $||.||_\mathcal{F}$, we know that the image of $\mathcal{F}$ is globally bounded. If the image is also closed it will be compact, and thus $\phi$ will be globally bounded in many cases, as most loss functions are continuous. This can also be seen as a justification to also use an intrinsic regularization for the norm of $f$ in addition to the regularization by the unsupervised loss, as only then the guarantees of Theorem \ref{BoundThm} apply.
Using this bound together with Lemma \ref{Lemma1} we can state the following corollary to give a PAC-style guarantee for our proposed framework.
\begin{corollary} \label{main_manifold}
Let $\phi$ and $\psi$ be convex supervised and an unsupervised loss function that fulfill the assumptions of Theorem \ref{BoundThm}. Then $f_\text{semi}$ \eqref{ERMSSL} satisfies the guarantees given in Theorem \ref{BoundThm}, when we replace for it $g$ in Inequality \eqref{bound}.
\end{corollary}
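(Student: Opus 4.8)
The plan is to reduce the claim to Theorem~\ref{BoundThm} via the convex-duality equivalence of Lemma~\ref{Lemma1}. By construction $f_{\text{semi}}$ is the minimizer of the regularized objective \eqref{regularized} built from the supervised loss $\phi$ and the unsupervised loss $\psi$, both convex in $f$ by hypothesis, so Lemma~\ref{Lemma1} applies: for the given trade-off parameter $\lambda>0$ there is a threshold $\tau'$ for which $f_{\text{semi}}$ is simultaneously a solution of the constrained problem \eqref{constrained} with bound $\tau'$, i.e.\ $f_{\text{semi}}=\arg\min_{f\in\mathcal F}\hat Q(f,S_n)$ subject to $\hat R(f,U)\le\tau'$.

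Next I would set $\tau:=\tau'-\tfrac{\epsilon}{2}$, so that the constrained estimator appearing in Theorem~\ref{BoundThm} — the minimizer of $\hat Q(\cdot,S_n)$ over $\{f\in\mathcal F:\hat R(f,U)\le\tau+\tfrac{\epsilon}{2}\}$ — is exactly $f_{\text{semi}}$. Since $\phi$ and $\psi$ are assumed to satisfy the hypotheses of Theorem~\ref{BoundThm} (measurability together with the uniform bounds $B_1,B_2$; the latter being the content of the remark preceding this corollary when $\mathcal F$ is a norm-regularized reproducing kernel Hilbert space), every hypothesis of that theorem is met. Hence, for labeled and unlabeled sample sizes $n,m$ at least as large as the explicit bounds in Theorem~\ref{BoundThm} with this $\tau$, we obtain with probability at least $1-\delta$ that $Q(f_{\text{semi}})\le Q(f^*_\tau)+\epsilon$ with $f^*_\tau=\arg\min_{f\in\mathcal F^{\psi}_{\tau}}Q(f)$, which is precisely the guarantee of Theorem~\ref{BoundThm} with $g$ replaced by $f_{\text{semi}}$.

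The main obstacle — and the place where care is needed — is that the threshold $\tau'$ furnished by Lemma~\ref{Lemma1} is not fixed a priori: the correspondence between $\lambda$ and $\tau'$ passes through the optimal solution and therefore generally depends on the drawn samples $S_n$ and $U$, whereas Theorem~\ref{BoundThm} is stated for a deterministic $\tau$, on which both the required sample sizes and the comparison point $f^*_\tau$ depend. To make the reduction fully rigorous one can read the equivalence in the opposite direction: fix the target $\tau$ (equivalently $\tau'=\tau+\tfrac{\epsilon}{2}$) first, invoke Lemma~\ref{Lemma1} to obtain the matching $\lambda$, and run \eqref{ERMSSL} with that $\lambda$; alternatively, one observes that for every realization $f_{\text{semi}}$ is a constrained ERM solution, so the conclusion holds for the (sample-dependent) threshold it induces, at the cost of needing a version of Theorem~\ref{BoundThm} that is uniform over $\tau$. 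I would flag this point explicitly rather than gloss over it, as it is the only non-cosmetic gap in an otherwise immediate corollary.
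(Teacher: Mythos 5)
Your reduction via Lemma~\ref{Lemma1} to the constrained estimator of Theorem~\ref{BoundThm} is exactly the argument the paper intends --- the corollary is stated there without any written proof, resting on precisely this equivalence. The subtlety you flag, namely that the $\lambda$--$\tau$ correspondence produced by Lemma~\ref{Lemma1} is sample-dependent while Theorem~\ref{BoundThm} requires a $\tau$ fixed in advance (both for the required sample sizes and for the comparison point $f^*_\tau$), is a genuine issue that the paper glosses over, so your version is if anything more careful than the original.
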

Recall that in the MR setting $\hat{R}(f)=\frac{1}{(n+m)^2}\sum_{i=1}^{n+m}W_{ij}(f(x_i)-f(x_j))^2$. So we gather unlabeled samples from $\mathcal{X} \times \mathcal{X}$ instead of $\mathcal{X}$. Collecting $m$ samples from $\mathcal{X}$ equates $m^2-1$ samples from $\mathcal{X} \times \mathcal{X}$ and thus we only need $\sqrt{m}$ instead of $m$ unlabeled samples for the same bound. 

\subsection{Comparison to the Supervised Solution}
In the SSL community it is well-known that using SSL does not come without a risk \citep[Chapter~4]{Chapelle}. Thus it is of particular interest how those methods compare to purely supervised schemes. There are, however, many potential supervised methods we can think of. In many works this problem is avoided by comparing to all possible supervised schemes \citep{BenDavid,Darnstadt,Globerson}. The framework introduced in this paper allows for a more fine-grained analysis as the semi-supervision happens on top of an already existing supervised methods. Thus, for our framework, it is natural to compare the sample complexities of $f_{\text{sup}}$ with the sample complexity of $f_{\text{semi}}$. To compare the supervised and semi-supervised solution we draw from \cite[Chapter 20]{Anthony}, where one can find lower and upper sample complexity bounds for the regression setting. To use this we have to restrict to the square loss, so in this section we set $\phi(f(x),y)=(f(x)-y)^2$. The main insight from \cite[Chapter 20]{Anthony} is that the sample complexity depends in this setting on whether the hypothesis class is (closure) convex or not. As we anyway need convexity of the space, which is stronger than closure convexity, to use Lemma \ref{Lemma1}, we can adapt Theorem 20.7 from \cite{Anthony} to our semi-supervised setting.

\begin{theorem} \label{UpperNN}
Assume that $\mathcal{F}^{\psi}_{\tau+\epsilon}$ is a closure convex class with functions mapping to $[0,1]$\footnote{In the remarks after Theorem \ref{BoundThm} we argue that in many cases |f(x)| is bounded, and in those cases we can always map to [0,1] by re-scaling.}, that $\psi(f,x)\leq B_1$ for all $x \in \mathcal{X}$ and $f \in \mathcal{F}$ and that $\phi(f(x),y)=(f(x)-y)^2$. Assume further that there is a $B_2>0$ such that $(f(x)-y)^2<B_2$ almost surely for all $(x,y) \in \mathcal{X} \times \mathcal{Y}$ and $f \in \mathcal{F}^{\psi}_{\tau+\epsilon}$.
Then an unlabeled sample size of $$m \geq \frac{2{B_1}^2}{\epsilon^2}\left[ \ln\frac{8}{\delta}+2\pdim(\mathcal{F},\psi)\ln\frac{2B_1}{\epsilon} +2 \right] $$
and a labeled sample size of 
\begin{equation} \label{complexNN}
n \geq \mathcal{O} \left( \frac{B^2}{\epsilon}\left(\pdim(\mathcal{F}^{\psi}_{\tau+\epsilon})\ln{\frac{\sqrt{B}}{\epsilon}}+\ln{\frac{2}{\delta}}\right) \right)
\end{equation}
is \emph{sufficient} to guarantee that with probability at least $1-\delta$ the classifier $g$ that minimizes $\hat{Q}( \cdot)$ w.r.t $\hat{R}(f) \leq \tau +\epsilon$ satisfies
\begin{equation}
    Q(g) \leq \min\limits_{f \in \mathcal{F^{\psi}_{\tau}}}Q(f)+\epsilon.
\end{equation}
\end{theorem}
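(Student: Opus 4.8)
The plan is to run the same two-stage argument as in the proof of Theorem~\ref{BoundThm}, but to replace the Hoeffding-plus-uniform-deviation bound used there for the labeled sample by the fast-rate estimate for the square loss over closure convex classes, i.e.\ Theorem~20.7 of \cite{Anthony}, which is exactly what produces the $1/\epsilon$ (rather than $1/\epsilon^2$) dependence in \eqref{complexNN}. Fix $f^*_\tau \in \arg\min_{f \in \mathcal{F}^\psi_\tau} Q(f)$ and let $\hat{\mathcal{F}} := \{ f \in \mathcal{F} \mid \hat{R}(f,U) \leq \tau + \epsilon \}$ be the empirically constrained search set, so that $g = \arg\min_{f \in \hat{\mathcal{F}}} \hat{Q}(f,S_n)$. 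I would prove: (i) the unlabeled sample size forces, with probability $1-\tfrac{\delta}{2}$, both $f^*_\tau \in \hat{\mathcal{F}}$ and that $\hat{\mathcal{F}}$ is a closure convex, $[0,1]$-valued class with $\pdim(\hat{\mathcal{F}}) \leq \pdim(\mathcal{F}^\psi_{\tau+\epsilon})$; (ii) Theorem~20.7 of \cite{Anthony} applied to $g$ as the empirical risk minimizer over $\hat{\mathcal{F}}$ gives $Q(g) \leq \inf_{f \in \hat{\mathcal{F}}} Q(f) + \epsilon$ with probability $1-\tfrac{\delta}{2}$; (iii) a union bound together with $f^*_\tau \in \hat{\mathcal{F}}$ then yields $Q(g) \leq Q(f^*_\tau) + \epsilon = \min_{f\in\mathcal{F}^\psi_\tau}Q(f)+\epsilon$.

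For step (i) I would argue verbatim as in the first part of the proof of Theorem~\ref{BoundThm}, only with deviation level $\epsilon$ instead of $\tfrac{\epsilon}{2}$ and target failure probability $\tfrac{\delta}{2}$ instead of $\tfrac{\delta}{4}$. Writing $h = \pdim(\mathcal{F},\psi)$, Theorem~5.1 of \cite{Vapnik} gives
\begin{equation*}
P\!\left[ \sup_{f \in \mathcal{F}} \bigl(\hat{R}(f) - R(f)\bigr) > \epsilon \right] \leq 4\, e^{\, h(\ln\frac{2m}{h}+1) - \frac{m}{B_1^2}\left(\epsilon - \frac{1}{m}\right)^2 };
\end{equation*}
bounding the right-hand side by $\tfrac{\delta}{2}$, rearranging, and applying $\ln x \leq \alpha x - \ln\alpha - 1$ with $x=m$ and $\alpha = \tfrac{\epsilon^2}{2hB_1^2}$ (the identical device, with $\epsilon$ in place of $\tfrac{\epsilon}{2}$) shows the stated $m$ suffices to make $\hat{R}(f) - R(f) < \epsilon$ hold uniformly over $\mathcal{F}$ with probability $1-\tfrac{\delta}{2}$. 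On that event $\hat{R}(f^*_\tau) < R(f^*_\tau) + \epsilon \leq \tau + \epsilon$, so $f^*_\tau \in \hat{\mathcal{F}}$; and $\hat{\mathcal{F}}$, being the intersection of the convex set $\mathcal{F}$ with a sublevel set of the convex map $f \mapsto \hat{R}(f,U)$, is convex and hence closure convex, is contained in $\mathcal{F}^\psi_{\tau+\epsilon}$ (so $[0,1]$-valued), and satisfies $\pdim(\hat{\mathcal{F}}) \leq \pdim(\mathcal{F}^\psi_{\tau+\epsilon})$ by monotonicity of the pseudo-dimension under inclusion.

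Step (ii) is then a direct invocation: applied to the closure convex, $[0,1]$-valued class $\hat{\mathcal{F}}$ with squared loss bounded by $B$ and $\pdim(\hat{\mathcal{F}}) \leq \pdim(\mathcal{F}^\psi_{\tau+\epsilon})$, Theorem~20.7 of \cite{Anthony} states that a labeled sample of size $n \geq \mathcal{O}\!\left( \tfrac{B^2}{\epsilon}\bigl(\pdim(\mathcal{F}^\psi_{\tau+\epsilon})\ln\tfrac{\sqrt{B}}{\epsilon} + \ln\tfrac{2}{\delta}\bigr) \right)$ makes the empirical minimizer $g$ over $\hat{\mathcal{F}}$ satisfy $Q(g) \leq \inf_{f \in \hat{\mathcal{F}}} Q(f) + \epsilon$ with probability $1-\tfrac{\delta}{2}$. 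Intersecting this event with the one from step (i) and using $f^*_\tau \in \hat{\mathcal{F}}$ gives $Q(g) \leq \inf_{f\in\hat{\mathcal{F}}}Q(f)+\epsilon \leq Q(f^*_\tau)+\epsilon = \min_{f\in\mathcal{F}^\psi_\tau}Q(f)+\epsilon$, and the two failure probabilities sum to at most $\delta$, which is the assertion.

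The main obstacle is not the mechanics of either step — the unlabeled computation is identical in spirit to that in Theorem~\ref{BoundThm}, and step (ii) is a black-box citation — but the passage between the \emph{data-dependent} constraint set $\hat{\mathcal{F}}$ and the \emph{fixed} closure convex class $\mathcal{F}^\psi_{\tau+\epsilon}$ whose pseudo-dimension appears in \eqref{complexNN}. One must check that $\hat{\mathcal{F}}$ genuinely inherits the hypotheses of Theorem~20.7 of \cite{Anthony} (closure convexity is immediate from convexity of $\psi$; the $[0,1]$ range and the pseudo-dimension bound come only through the uniform deviation estimate on $\hat{R}$), and one must thread the inclusion $\hat{\mathcal{F}} \subseteq \mathcal{F}^\psi_{\tau+\epsilon}$ carefully: a one-sided deviation bound at level $\epsilon$ only yields $\hat{\mathcal{F}} \subseteq \mathcal{F}^\psi_{\tau+2\epsilon}$, so to land exactly at $\tau+\epsilon$ one should either use a two-sided form of Theorem~5.1 of \cite{Vapnik} at level $\tfrac{\epsilon}{2}$ or absorb the extra constant into the $\mathcal{O}(\cdot)$ of the labeled bound and a rescaling of $\epsilon$.
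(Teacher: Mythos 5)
Your proposal is correct and follows essentially the same route as the paper's own proof: the unlabeled sample guarantees, exactly as in the first part of Theorem \ref{BoundThm}, that $f^*_\tau$ lies in the empirically constrained search space, the fast-rate square-loss bound of Theorem 20.7 of \cite{Anthony} handles the labeled part, and a union bound combines the two. If anything you are more careful than the paper, which applies the Anthony result directly to the fixed class $\mathcal{F}^{\psi}_{\tau+\epsilon}$ and silently identifies it with the data-dependent constraint set $\{f \mid \hat{R}(f,U)\leq\tau+\epsilon\}$; the containment issue you flag at the end (a one-sided deviation bound only yields $\hat{\mathcal{F}}\subseteq\mathcal{F}^{\psi}_{\tau+2\epsilon}$) is genuine and is not addressed in the paper's proof either.
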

\begin{proof} As in the proof of Theorem \ref{BoundThm} the unlabeled sample size is sufficient to guarantee with probability at least $1-\frac{\delta}{2}$ that ${R}(f^*_{\tau}) \leq \tau +\epsilon$. The labeled sample size is big enough to guarantee with at least $1-\frac{\delta}{2}$ that $Q(g) \leq Q(f^*_{\tau+\epsilon})+\epsilon$ \citep[Theorem 20.7]{Anthony}. Using the union bound we have with probability of at least $1-\delta$ that
$Q(g) \leq Q(f^*_{\tau+\epsilon})+\epsilon \leq Q(f^*_{\tau})+\epsilon$.
\end{proof}

Note that the previous theorem of course implies the same learning rate in the supervised case, as the only difference will be the pseudo-dimension term. As in specific scenarios this is also the best possible learning rate, we obtain the following negative result for SSL.

\begin{corollary} \label{corlimits}
Assume that $\mathcal{F}$ maps to the interval $[0,1]$ and $\mathcal{Y}=[1-B,B]$ for a $B \geq 2$. If $\mathcal{F}$ and $\mathcal{F}^{\psi}_{\tau}$ are both closure convex, then for sufficiently small $\epsilon,\delta>0$ it holds that $m^{\text{sup}}(\epsilon,\delta) = \tilde{\mathcal{O}}(m^{\text{semi}}(\epsilon,\delta))$, where $\tilde{\mathcal{O}}$ suppresses logarithmic factors, and $m^{\text{semi}},m^{\text{sup}}$ denote the sample complexity of the semi-supervised and the supervised learner respectively. In other words, the semi-supervised method can improve the learning rate by at most a constant which may depend on the pseudo-dimensions, ignoring logarithmic factors. Note that this holds in particular for the manifold regularization algorithm.
\end{corollary}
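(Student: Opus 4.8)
The plan is to show that $m^{\text{sup}}$ and $m^{\text{semi}}$ have, up to logarithmic factors, the same dependence on $\epsilon$ and $\delta$, by pairing an $\tilde{\mathcal{O}}(1/\epsilon)$ \emph{upper} bound on $m^{\text{sup}}$ with an $\Omega(1/\epsilon)$ \emph{lower} bound on $m^{\text{semi}}$; the constant hidden in $\tilde{\mathcal{O}}$ is then free to absorb the pseudo-dimensions and $B$.

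For the upper bound, observe that the labeled-sample requirement inside Theorem~\ref{UpperNN} is exactly Theorem~20.7 of \cite{Anthony} applied to the constrained class $\mathcal{F}^{\psi}_{\tau+\epsilon}$. Applying that same theorem directly to $\mathcal{F}$ itself — which is closure convex by hypothesis, maps into $[0,1]$, and on which the square loss is bounded by $B^{2}$ since $\mathcal{Y}=[1-B,B]$ — gives
\[ m^{\text{sup}}(\epsilon,\delta)\;=\;\mathcal{O}\!\left(\frac{B^{2}}{\epsilon}\Big(\pdim(\mathcal{F})\ln\tfrac{B}{\epsilon}+\ln\tfrac{1}{\delta}\Big)\right)\;=\;\tilde{\mathcal{O}}\!\left(\tfrac{1}{\epsilon}\right). \]

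For the lower bound I would invoke the companion lower bound of \cite[Chapter~20]{Anthony}: for a closure convex class learned with respect to the square loss there is a family of distributions forcing $\Omega(1/\epsilon)$ labeled examples to reach excess risk $\epsilon$ with confidence $\delta$ — this is precisely what certifies that the $1/\epsilon$ rate above is the correct one, in contrast to the $1/\epsilon^{2}$ rate possible for non-convex classes. The key point is that this lower bound transfers to every learner in our framework: the hard instances can be taken to share a single, fixed, known marginal $\mu$ on $\mathcal{X}$ (the standard regression lower bounds fix the inputs and only vary the conditional law of $Y$), so unlabeled draws from $\mu$ carry no information and the semi-supervised learner, restricted to this family, is just an ordinary supervised learner over $\mathcal{F}$. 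One further arranges the conditional laws so that the best predictor in $\mathcal{F}$ already satisfies $R(f)\le\tau$ — lower-bound constructions for convex classes place the target inside the class, so this is compatible — which makes $\min_{f\in\mathcal{F}}Q(f)$ coincide with $\min_{f\in\mathcal{F}^{\psi}_{\tau}}Q(f)$ and neutralizes the manifold constraint; a large enough unlabeled sample then keeps $f_{\text{semi}}$ inside $\mathcal{F}^{\psi}_{\tau+o(1)}$, exactly as in the proof of Theorem~\ref{BoundThm}. Hence $m^{\text{semi}}(\epsilon,\delta)=\Omega(1/\epsilon)$, and combining with the display above gives $m^{\text{sup}}(\epsilon,\delta)=\tilde{\mathcal{O}}(m^{\text{semi}}(\epsilon,\delta))$. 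Since MR is an instance of the framework (Corollary~\ref{main_manifold}) and convexity of $\psi$ makes $\mathcal{F}^{\psi}_{\tau}$ convex whenever $\mathcal{F}$ is, the statement for MR follows from the same argument.

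The step I expect to be the main obstacle is making this transfer watertight: one has to (i) pin down a version of the Anthony--Bartlett convex-class lower bound whose hard instances genuinely share one fixed marginal (and in which the target sits inside $\mathcal{F}^{\psi}_{\tau}$), and (ii) check that the constraint $\hat{R}(\cdot,U)\le\tau$ can indeed be rendered inert as sketched, so that the reduction ``semi-supervised learner on this family $=$ supervised learner over $\mathcal{F}$'' is legitimate (the degenerate regimes, $\mathcal{F}^{\psi}_{\tau}=\mathcal{F}$ or $\mathcal{F}^{\psi}_{\tau}$ too small to reach excess risk $\epsilon$, make the comparison trivial and should be handled separately). A minor point concerns the $\ln(1/\delta)$ term: if $\tilde{\mathcal{O}}$ is read as also suppressing logarithmic factors in $1/\delta$, the plain $\Omega(1/\epsilon)$ lower bound already closes the gap with the upper bound; otherwise one should use the refined version of the lower bound that additionally carries the confidence term $\Omega(\tfrac{1}{\epsilon}\ln\tfrac{1}{\delta})$.
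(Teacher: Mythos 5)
Your proposal follows essentially the same route as the paper's own proof: an upper bound on $m^{\text{sup}}$ obtained by applying Theorem~20.7 of Anthony--Bartlett to the closure convex class $\mathcal{F}$ itself, paired with the $\Omega(\frac{1}{\epsilon}+\pdim(\mathcal{F}^{\psi}_{\tau}))$ lower bound from the same reference (the paper cites its Eq.~(19.5), via the fat-shattering dimension) applied to the semi-supervised learner, followed by the observation that the remaining gap is constant or logarithmic in $\epsilon$ and $\delta$. The only difference is that you spell out why the lower bound transfers to a learner that also sees unlabeled data (fixed marginal, target inside $\mathcal{F}^{\psi}_{\tau}$), a point the paper's three-sentence proof leaves implicit.
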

\begin{proof}
The assumptions made in the theorem allow is to invoke Equation (19.5) from \cite{Anthony} which states that $m^{\text{semi}}=\Omega(\frac{1}{\epsilon}+\pdim(\mathcal{F}^{\psi}_{\tau}))$.\footnote{Note that the original formulation is in terms of the fat-shattering dimension, but this is always bounded by the pseudo-dimension.}
Using Inequality \eqref{complexNN} as an upper bound for the supervised method and comparing this to Eq. (19.5) from \cite{Anthony} we observe that all differences are either constant or logarithmic in $\epsilon$ and $\delta$.
\end{proof} 

\subsection{The Limits of Manifold Regularization} \label{limits}
We now relate our result to the conjectures published in \cite{Shalev}: A SSL cannot learn faster by more than a constant (which may depend on the hypothesis class $\mathcal{F}$ and the loss $\phi$) than the supervised learner. Theorem 1 from \cite{Darnstadt} showed that this conjecture is true up to a logarithmic factor, much like our result, for classes with finite VC-dimension, and SSL that do \emph{not} make any distributional assumptions. Corollary \ref{corlimits} shows that this statement also holds in some scenarios for all SSL that fall in our proposed framework. This is somewhat surprising, as our result holds explicitly for SSLs that \emph{do} make assumptions about the distribution: MR assumes the labeling function behaves smoothly w.r.t. the underlying manifold.

\section{Rademacher Complexity of Manifold Regularization} \label{Rademacher}
In order to find out in which scenarios semi-supervised learning can help it is useful to also look at distribution \emph{dependent} complexity measures. For this we derive computational feasible upper and lower bounds on the Rademacher complexity of MR. We first review the work of \cite{Sindhwani1}: they create a kernel such that the inner product in the corresponding kernel Hilbert space contains automatically the regularization term from MR. Having this kernel we can use standard upper and lower bounds of the Rademacher complexity for RKHS, as found for example in \cite{Boucheron}. The analysis is thus similar to \cite{Sindhwani2}. They consider a co-regularization setting. In particular \cite[p1]{Sindhwani1} show the following, here informally stated, theorem.
\begin{theorem}[{\cite[Propositions 2.1, 2.2]{Sindhwani1}}]
Let $H$ be a RKHS with inner product $\langle\cdot,\cdot\rangle_H$. As before let $U=\{x_1,...,x_{n+m}\}$, $f,g \in H$ and $f_U=(f(x_1),...,f(x_{n+m}))^t$. Furthermore let $\langle\cdot,\cdot\rangle_{\mathbb{R}^n}$ be any inner product in $\mathbb{R}^n$. Let $\tilde{H}$ be the same space of functions as $H$, but with a newly defined inner product by
$
\langle f,g\rangle_{\tilde{H}}=\langle f,g\rangle_H+\langle f_U,g_U\rangle_{\mathbb{R}^n}.
$
Then $\tilde{H}$ is a RKHS.
\end{theorem}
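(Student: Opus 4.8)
The plan is to verify the two defining properties of a reproducing kernel Hilbert space for $\tilde{H}$: that it is a Hilbert space (i.e. $\langle\cdot,\cdot\rangle_{\tilde H}$ is a genuine inner product and the resulting space is complete), and that point evaluation functionals are bounded. Since $\tilde H$ consists of the same functions as $H$, only the geometry changes, so the work is entirely about the new inner product.

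First I would check that $\langle f,g\rangle_{\tilde H}=\langle f,g\rangle_H+\langle f_U,g_U\rangle_{\mathbb{R}^{n}}$ is symmetric and bilinear, which is immediate since it is a sum of two such forms, and that it is positive definite: $\langle f,f\rangle_{\tilde H}=\|f\|_H^2+\langle f_U,f_U\rangle_{\mathbb{R}^n}\geq \|f\|_H^2\geq 0$, with equality only if $\|f\|_H=0$, hence $f=0$. (Note the second term is only positive semidefinite — a function can vanish on $U$ without being zero — so it is the original $H$-norm that supplies definiteness; this is worth stating explicitly.) Next, for completeness, I would observe that the two norms $\|\cdot\|_H$ and $\|\cdot\|_{\tilde H}$ are equivalent on the common function space: clearly $\|f\|_H\leq\|f\|_{\tilde H}$, and conversely, because $H$ is an RKHS, each evaluation $f\mapsto f(x_i)$ is bounded by $\sqrt{k(x_i,x_i)}\,\|f\|_H$, so $\langle f_U,f_U\rangle_{\mathbb{R}^n}\leq C\|f_U\|_{2}^2\leq C'\|f\|_H^2$ where $C$ is the operator norm of the inner product $\langle\cdot,\cdot\rangle_{\mathbb{R}^n}$ against the Euclidean one and $C'=C\sum_i k(x_i,x_i)$. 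Equivalent norms share the same Cauchy sequences and the same limits, so since $H$ is complete, so is $\tilde H$.

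Finally, for the reproducing property: boundedness of evaluation functionals on $\tilde H$ follows from the same norm-equivalence, since $|f(x)|\leq\sqrt{k(x,x)}\,\|f\|_H\leq\sqrt{k(x,x)}\,\|f\|_{\tilde H}$. By the Riesz representation theorem applied in the Hilbert space $\tilde H$, there is then a reproducing kernel $\tilde k$, and $\tilde H$ is an RKHS. If one wants the explicit formula for $\tilde k$ (as the cited propositions give), it can be recovered by writing $\tilde k(\cdot,x)=k(\cdot,x)-\text{(correction term involving the Gram matrix of }k\text{ on }U\text{ and the matrix of }\langle\cdot,\cdot\rangle_{\mathbb{R}^n})$, but for merely establishing the RKHS claim this is not needed.

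The main obstacle is the completeness step: one must be careful that adding a merely positive semidefinite term does not destroy completeness or, worse, fail to give an inner product at all. The resolution — that the extra term is bounded above by a multiple of $\|\cdot\|_H^2$ precisely because $H$ is already an RKHS, giving norm equivalence — is the crux, and everything else is routine. A minor point to get right is that $\langle\cdot,\cdot\rangle_{\mathbb{R}^n}$ is an arbitrary inner product on $\mathbb{R}^n$, hence of the form $u^{t}Mv$ for some symmetric positive definite $M$, whose largest eigenvalue furnishes the constant $C$ above.
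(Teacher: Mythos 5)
Your proof is correct, and the paper itself gives no proof of this statement --- it is imported from \cite[Propositions 2.1, 2.2]{Sindhwani1} --- but your argument (definiteness supplied by the $\|\cdot\|_H$ term since the evaluation form is only positive semidefinite, norm equivalence via the bound $|f(x_i)|\leq\sqrt{k(x_i,x_i)}\,\|f\|_H$, hence completeness and bounded point evaluations, then Riesz) is exactly the standard one used in that source. One small point worth noting: the statement's $\mathbb{R}^n$ should read $\mathbb{R}^{n+m}$, since $f_U$ has $n+m$ entries; your argument is unaffected by this.
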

Assume now that $L$ is a positive definite $n$-dimensional matrix and we set the inner product $\langle f_U,g_U\rangle_{\mathbb{R}^n}=f_U^t L g_U.$ By setting $L$ as the Laplacian matrix (Section \ref{PF}) we note that the norm of $\tilde{H}$ automatically regularizes w.r.t. the data manifold given by $\{x_1,...,x_{n+m}\}$. We furthermore know the exact form of the kernel of $\tilde{H}$.
\begin{theorem}[{\cite[Proposition 2.2]{Sindhwani1}}]
Let $k(x,y)$ be the kernel of $H$, $K$ be the gram matrix given by $K_{ij}=k(x_i,x_j)$ and $k_x=(k(x_1,x),...,k(x_{n+m},x))^t$. Finally let $I$ be the $n+m$ dimensional identity matrix. The kernel of $\tilde{H}$ is then given by $\tilde{k}(x,y)=k(x,y)-k_x^t(I+LK)^{-1}Lk_y.$
\end{theorem}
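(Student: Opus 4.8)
The plan is to exploit uniqueness of the reproducing kernel: by the preceding theorem $\tilde H$ is an RKHS, so it possesses exactly one reproducing kernel, and it therefore suffices to \emph{guess} a candidate $\tilde k$ and verify its two defining properties — that $\tilde k(\cdot,x)\in\tilde H$ for every $x$, and that $\langle f,\tilde k(\cdot,x)\rangle_{\tilde H}=f(x)$ for every $f\in\tilde H$ and every $x$. Since $\tilde H$ and $H$ are the same set of functions, I would look for $\tilde k(\cdot,x)$ as a correction of the original $k(\cdot,x)$ by a finite linear combination of the ``anchor'' functions $k(\cdot,x_1),\dots,k(\cdot,x_{n+m})$, i.e. take the ansatz
\[
\tilde k(\cdot,x)=k(\cdot,x)-\sum_{j=1}^{n+m}\alpha_j(x)\,k(\cdot,x_j),
\]
for a coefficient vector $\alpha(x)\in\mathbb{R}^{n+m}$ to be determined. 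With this ansatz the membership $\tilde k(\cdot,x)\in H=\tilde H$ is automatic, since $H$ is a vector space containing $k(\cdot,x)$ and each $k(\cdot,x_j)$.

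Next I would impose the reproducing property and convert it into a linear system for $\alpha(x)$. Writing $h_x:=\sum_j\alpha_j(x)k(\cdot,x_j)$ and using the reproducing property of $k$ in $H$ one gets $\langle f,h_x\rangle_H=f_U^t\alpha(x)$, the evaluation vector $(h_x)_U=K\alpha(x)$, and $(k(\cdot,x))_U=k_x$. Substituting $\tilde k(\cdot,x)=k(\cdot,x)-h_x$ into $\langle f,\tilde k(\cdot,x)\rangle_{\tilde H}=\langle f,\tilde k(\cdot,x)\rangle_H+f_U^tL\,(\tilde k(\cdot,x))_U$, the term $\langle f,k(\cdot,x)\rangle_H=f(x)$ supplies exactly the desired value, so what remains is the requirement
\[
f_U^t\big(\alpha(x)-L\,k_x+L K\alpha(x)\big)=0\qquad\text{for all }f\in H .
\]
It is \emph{sufficient} to force the bracket to vanish, i.e. to solve $(I+LK)\alpha(x)=Lk_x$. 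The matrix $I+LK$ is invertible: when $L$ is positive semidefinite, $LK$ is similar to $L^{1/2}KL^{1/2}$, whose eigenvalues are nonnegative, so $I+LK$ has no zero eigenvalue. Hence $\alpha(x)=(I+LK)^{-1}Lk_x$.

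Plugging this back gives $\tilde k(x,y)=\tilde k(\cdot,x)(y)=k(x,y)-h_x(y)=k(x,y)-k_y^t\alpha(x)=k(x,y)-k_y^t(I+LK)^{-1}Lk_x$, and the final step is to recognise that this coincides with the asserted formula $k(x,y)-k_x^t(I+LK)^{-1}Lk_y$. This follows from the push-through identity $(I+LK)^{-1}L=L(I+KL)^{-1}$, which is immediate from $(I+LK)L=L+LKL=L(I+KL)$, together with the symmetry of $K$ and $L$: transposing the scalar $k_y^t(I+LK)^{-1}Lk_x$ turns it into $k_x^tL(I+KL)^{-1}k_y=k_x^t(I+LK)^{-1}Lk_y$. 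The same computation shows $\tilde k(x,y)=\tilde k(y,x)$, so $\tilde k$ is genuinely a symmetric kernel, and by uniqueness of the reproducing kernel of $\tilde H$ we are done.

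The only genuinely delicate points, and where I would be most careful, are: (i) that we are allowed to replace ``$f_U^t(\cdot)=0$ for all $f\in H$'' by forcing the bracket to zero — this is only a sufficient condition, which is all that is needed, so no spanning argument about $\{f_U:f\in H\}$ is required; and (ii) the small matrix manipulation reconciling the two equivalent expressions for $\tilde k$ and establishing symmetry. Everything else is routine bookkeeping with the reproducing property. One minor point worth noting is that the inner product $\langle f,g\rangle_{\tilde H}=\langle f,g\rangle_H+f_U^tLg_U$ is already a valid inner product when $L$ is merely positive semidefinite, since then $\langle f,f\rangle_{\tilde H}\geq\|f\|_H^2$; thus the Laplacian $L=D-W$, which need not be strictly positive definite, is covered by the argument.
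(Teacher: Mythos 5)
The paper does not prove this statement at all --- it is imported verbatim as Proposition 2.2 of the cited Sindhwani et al.\ reference --- so there is no in-paper argument to compare against; what you have written is a self-contained verification, and it is correct. Your strategy (ansatz $\tilde k(\cdot,x)=k(\cdot,x)-\sum_j\alpha_j(x)k(\cdot,x_j)$, reduce the reproducing property in $\tilde H$ to the linear system $(I+LK)\alpha(x)=Lk_x$, then invoke uniqueness of the reproducing kernel) is exactly the standard derivation, and your two ``delicate points'' are handled properly: forcing the bracket to vanish is indeed only a sufficient condition but that is all uniqueness requires, and the push-through identity $(I+LK)^{-1}L=L(I+KL)^{-1}$ together with symmetry of $K$ and $L$ correctly reconciles $k_y^t(I+LK)^{-1}Lk_x$ with the asserted $k_x^t(I+LK)^{-1}Lk_y$ and gives symmetry of $\tilde k$. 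Two cosmetic remarks. First, your invertibility argument says $LK$ is ``similar to $L^{1/2}KL^{1/2}$,'' which literally requires $L$ invertible; in the merely positive semidefinite case you should instead use that $AB$ and $BA$ have the same characteristic polynomial (with $A=L^{1/2}$, $B=L^{1/2}K$), so the eigenvalues of $LK$ are still those of the PSD matrix $L^{1/2}KL^{1/2}$ and $I+LK$ remains invertible --- the conclusion is unaffected, and in the paper's setting $L$ is assumed positive definite anyway. Second, $\tilde k(\cdot,x)(y)$ is $\tilde k(y,x)$ rather than $\tilde k(x,y)$, a harmless index swap that your own symmetry computation absorbs. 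Neither affects the validity of the proof.
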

This interpretation of MR is useful to derive computationally feasible upper and lower bounds of the empirical Rademacher complexity, giving distribution \emph{dependent} complexity bounds. With $\sigma=(\sigma_1,...,\sigma_n)$ i.i.d Rademacher random variables (i.e. $P(\sigma_i=1)=P(\sigma_i=-1)=\frac{1}{2}$.), recall that the empirical Rademacher complexity of the hypothesis class $H$ and measured on the sample labeled input features $\{x_1,...,x_n\}$ is defined as 
$$
\rad_n(H)=\frac{1}{n} \mathbb{E}_{\sigma} \sup\limits_{f \in H} \sum_{i=1}^n  \sigma_i f(x_i).
$$
\begin{theorem}[{\cite[p. 333]{Boucheron}}]
Let $H$ be a RKHS with kernel $k$ and $H_r=\{ f \in H \mid ||f||_H \leq r \}$. 
Given an $n$ sample $\{x_1,...,x_n\}$ we can bound the empirical Rademacher complexity of $H_r$ by
\begin{equation} \label{radbounds}
\frac{r}{n \sqrt{2}} \sqrt{\sum_{i=1}^n k(x_i,x_i)} \leq \rad_n(H_r) \leq \frac{r}{n} \sqrt{\sum_{i=1}^n k(x_i,x_i)}.
\end{equation}
\end{theorem}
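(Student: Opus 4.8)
The plan is to reduce the empirical Rademacher complexity of the ball $H_r$ to the expected norm of a single Rademacher sum in $H$, compute the second moment of that norm exactly, and then sandwich the first moment between it. First I would use the reproducing property: for each sample point $f(x_i)=\langle f,k(\cdot,x_i)\rangle_H$, so that $\sum_{i=1}^n\sigma_i f(x_i)=\langle f,\ g_\sigma\rangle_H$ where $g_\sigma:=\sum_{i=1}^n\sigma_i k(\cdot,x_i)\in H$. By Cauchy--Schwarz the supremum over $\{f\in H:\|f\|_H\le r\}$ equals exactly $r\,\|g_\sigma\|_H$ (attained at $f=r\,g_\sigma/\|g_\sigma\|_H$ when $g_\sigma\neq 0$, and trivially otherwise), hence
$$\rad_n(H_r)=\frac{r}{n}\,\mathbb{E}_\sigma\big\|g_\sigma\big\|_H.$$

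Next I would compute the second moment. Expanding the squared norm, using $\langle k(\cdot,x_i),k(\cdot,x_j)\rangle_H=k(x_i,x_j)$ and the fact that $\mathbb{E}[\sigma_i\sigma_j]$ equals $1$ if $i=j$ and $0$ otherwise, gives
$$\mathbb{E}_\sigma\big\|g_\sigma\big\|_H^2=\sum_{i,j=1}^n\mathbb{E}[\sigma_i\sigma_j]\,k(x_i,x_j)=\sum_{i=1}^n k(x_i,x_i).$$
The upper bound in \eqref{radbounds} then follows at once from Jensen's inequality applied to the concave map $t\mapsto\sqrt t$, namely $\mathbb{E}_\sigma\|g_\sigma\|_H\le(\mathbb{E}_\sigma\|g_\sigma\|_H^2)^{1/2}$, after multiplying by $r/n$.

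For the lower bound I would invoke a Khintchine--Kahane type inequality: for a Rademacher sum $\sum_i\sigma_i v_i$ valued in a Hilbert space one has $(\mathbb{E}\|\sum_i\sigma_i v_i\|^2)^{1/2}\le\sqrt2\;\mathbb{E}\|\sum_i\sigma_i v_i\|$. Applying this with $v_i=k(\cdot,x_i)$ yields $\mathbb{E}_\sigma\|g_\sigma\|_H\ge\frac{1}{\sqrt2}\big(\sum_i k(x_i,x_i)\big)^{1/2}$, and multiplying by $r/n$ completes the proof.

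The only genuinely non-routine ingredient is the constant $1/\sqrt2$. A naive route — bounding the fourth moment $\mathbb{E}_\sigma\|g_\sigma\|_H^4$ and interpolating with Hölder between the first and fourth moments — only delivers the weaker constant $1/\sqrt3$, because $\mathbb{E}_\sigma\|g_\sigma\|_H^4$ can be as close as one likes to $3\,(\mathbb{E}_\sigma\|g_\sigma\|_H^2)^2$ (take all $x_i$ equal). Peeling off a single Rademacher variable and using the parallelogram law does produce the factor $1/\sqrt2$ pointwise, but the residual expectation then passes through $\sqrt{\cdot}$ in the unfavourable direction, so this does not close by itself. Moreover $1/\sqrt2$ is sharp: for $n=2$ with $k(\cdot,x_1)=k(\cdot,x_2)$ one has $\|g_\sigma\|_H\in\{0,\,2\|k(\cdot,x_1)\|_H\}$ with equal probability and equality holds. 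So the main work is to cite (or reprove) the sharp vector-valued Khintchine inequality; given that, everything else is a short computation from the reproducing property and the orthonormality of the Rademacher variables.
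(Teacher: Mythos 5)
Your proof is correct and is essentially the argument behind the cited result: the paper itself gives no proof (it quotes \cite[p.~333]{Boucheron}), and the standard derivation there is exactly your route --- reproducing property plus Cauchy--Schwarz to get $\rad_n(H_r)=\frac{r}{n}\mathbb{E}_\sigma\|g_\sigma\|_H$, Jensen for the upper bound, and the sharp vector-valued Khintchine--Kahane inequality with constant $\sqrt{2}$ for the lower bound. Your remarks on the sharpness of $1/\sqrt{2}$ and the failure of the fourth-moment interpolation are accurate but not needed for the statement.
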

The previous two theorems lead to upper bounds on the complexity of MR, in particular we can bound the maximal reduction over supervised learning.
\begin{corollary}
Let $H$ be a RKHS and for $f,g \in H$ define the inner product $\langle f,g\rangle_{\tilde{H}}=\langle f,g\rangle_{H}+f_U (\mu L) g_U^t$, where $L$ is a positive definite matrix and $\mu \in \mathbb{R}$ is a regularization parameter. Let $\tilde{H}_r$ be defined as before, then 
\begin{equation} \label{radeupper}
\rad_n(\tilde{H}_r) \leq \frac{r}{n} \sqrt{\sum_{i=1}^n k(x_i,x_i) -k^t_{x_i}(\frac{1}{\mu}I+ LK)^{-1} Lk_{x_i}}.
\end{equation}
Similarly we can obtain a lower bound in line with Inequality \eqref{radbounds}.
\end{corollary}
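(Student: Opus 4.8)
The plan is to treat $\tilde H$ equipped with the scaled inner product as an ordinary reproducing kernel Hilbert space and then simply read the bound off from the explicit kernel formula together with the two-sided RKHS Rademacher estimate already quoted. First I would note that $\langle f,g\rangle_{\tilde H}=\langle f,g\rangle_H+f_U^t(\mu L)g_U$ is of exactly the form covered by Proposition~2.1 of \cite{Sindhwani1}, as long as $\mu L$ is a genuine inner product on $\mathbb{R}^{n+m}$; since $L$ is assumed positive definite this just amounts to taking $\mu>0$. Consequently $\tilde H$ is an RKHS and, in particular, its reproducing kernel $\tilde k$ is a valid (positive semidefinite) kernel, which is precisely what the Boucheron-type bound needs as input.

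Second, I would compute $\tilde k$ explicitly by invoking Proposition~2.2 of \cite{Sindhwani1} with the matrix $L$ there replaced by $\mu L$, which gives $\tilde k(x,y)=k(x,y)-k_x^t(I+\mu LK)^{-1}\mu L\,k_y$. The only non-cosmetic step is the identity $(I+\mu LK)^{-1}\mu L=\left(\tfrac1\mu I+LK\right)^{-1}L$, obtained by pulling the scalar $\mu$ out of $I+\mu LK=\mu\left(\tfrac1\mu I+LK\right)$; this is legitimate because $\mu>0$ and $I+\mu LK$ is invertible (indeed $LK$ is similar to the positive semidefinite matrix $L^{1/2}KL^{1/2}$, so its eigenvalues are nonnegative and those of $\tfrac1\mu I+LK$ are at least $\tfrac1\mu>0$). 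Hence $\tilde k(x,y)=k(x,y)-k_x^t\left(\tfrac1\mu I+LK\right)^{-1}L\,k_y$, and on the diagonal $\tilde k(x_i,x_i)=k(x_i,x_i)-k_{x_i}^t\left(\tfrac1\mu I+LK\right)^{-1}Lk_{x_i}\ge 0$.

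Third, I would apply the two-sided RKHS Rademacher estimate \eqref{radbounds} to the ball $\tilde H_r=\{f\in\tilde H:\|f\|_{\tilde H}\le r\}$ with kernel $\tilde k$. Its upper half,
\[
\rad_n(\tilde H_r)\le \frac{r}{n}\sqrt{\sum_{i=1}^n \tilde k(x_i,x_i)},
\]
becomes Inequality~\eqref{radeupper} once we substitute the diagonal values just computed, and the lower half of the same inequality gives the promised lower bound ``in line with Inequality~\eqref{radbounds}'', namely $\rad_n(\tilde H_r)\ge \frac{r}{n\sqrt2}\sqrt{\sum_{i=1}^n \tilde k(x_i,x_i)}$.

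Since the argument is essentially a substitution plus one scalar-factoring identity, I do not expect a genuine obstacle; the only points that need care are making the positivity hypotheses explicit ($\mu>0$ with $L$ positive definite, so that $\tilde k$ is a bona fide kernel and the RKHS Rademacher bound is applicable) and checking invertibility of $I+\mu LK$ together with nonnegativity of the terms appearing under the square root.
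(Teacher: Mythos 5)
Your proposal is correct and follows exactly the route the paper intends: the corollary is obtained by substituting $\mu L$ into the kernel formula of Proposition~2.2 of \cite{Sindhwani1}, factoring the scalar $\mu$ out of $(I+\mu LK)^{-1}\mu L$ to get $(\frac{1}{\mu}I+LK)^{-1}L$, and plugging the resulting diagonal kernel values into the two-sided bound \eqref{radbounds}. The paper leaves all of this implicit ("the previous two theorems lead to..."), so your explicit checks of invertibility and positivity are welcome additions rather than deviations.
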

The corollary allows us to compute upper bounds of the Rademacher complexity for MR and
shows in particular that the difference of the Rademacher complexity of the supervised and the semi-supervised method is given by the term $k^t_{x_i}(\frac{1}{\mu}I_{n+m}+ LK)^{-1} Lk_{x_i}$. This can be used for example to compute generalization bounds \citep[Chapter 3]{foundation}. We can also use the kernel to compute local Rademacher complexities which may yield tighter generalization bounds \citep{bartlett2005}. Here we illustrate the use of our bounds for choosing the regularization parameter $\mu$ without the need for an additional labeled validation set.
\section{Experiment: Concentric circles} \label{experiments}
\begin{figure}
  \centering
\includegraphics[scale=0.6]{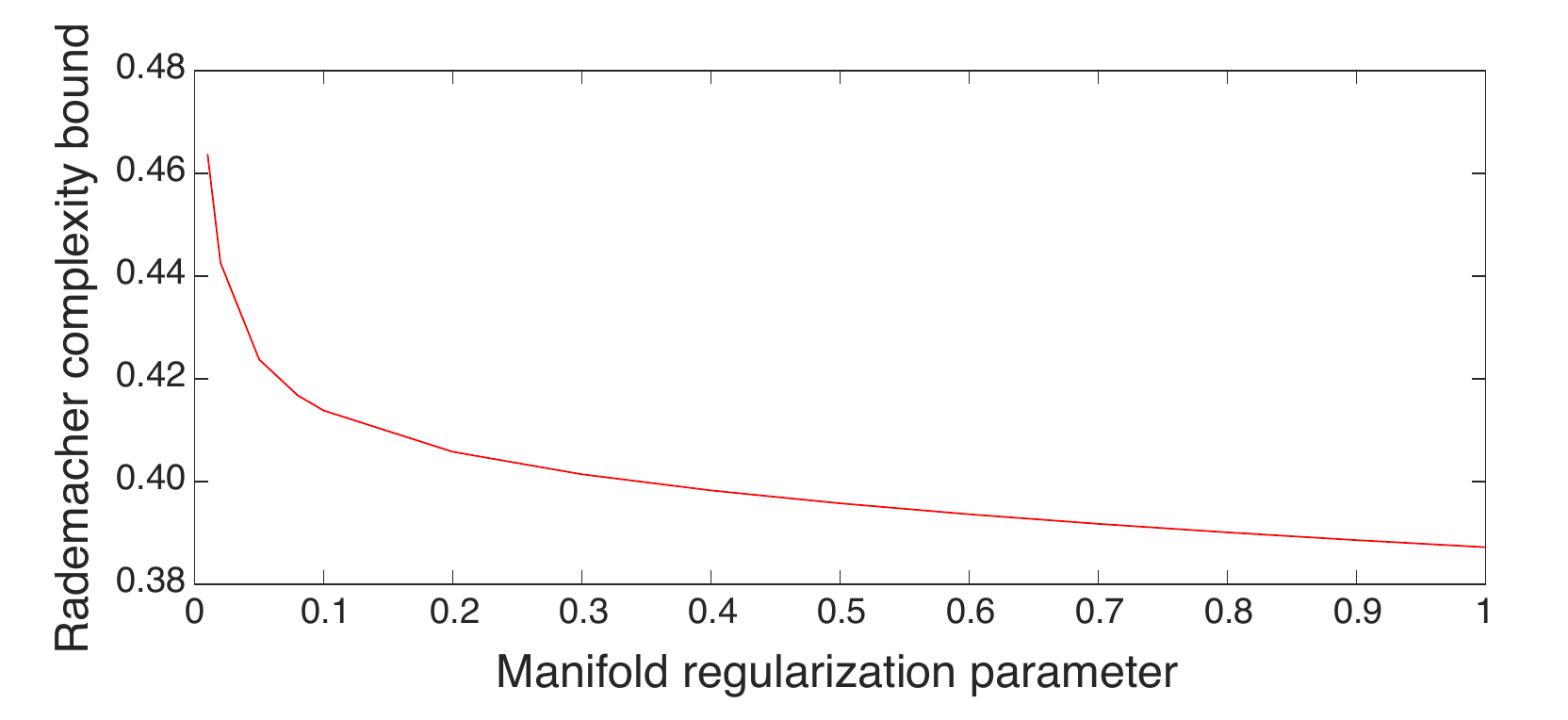}
  \caption{ The behavior of the Rademacher complexity when using manifold regularization on circle dataset with different regularization values $\mu$. }
  \label{Rademachercurve}
\end{figure}
We illustrate the use of Eq. \eqref{radeupper} for model selection. In particular, it can be used to get an initial idea of how to choose the regularization parameter $\mu$. The idea is to plot the Rademacher complexity versus the parameter $\mu$ as in Figure \ref{Rademachercurve}. We propose to use an heuristic which is often used in clustering, the so called elbow criteria \citep{Purnima}. We essentially want to find a $\mu$ such that increasing the $\mu$ will not result in much reduction of the complexity anymore. We test this idea on a dataset which consists out of two concentric circles with 500 datapoints in $\mathbb{R}^2$, 250 per circle, see also Figure \ref{circles_manifold}. We use a Gaussian base kernel with bandwidth set to $0.5$. The MR matrix $L$ is the Laplacian matrix, where weights are computed with a Gaussian kernel with bandwidth $0.2$. Note that those parameters have to be carefully set in order to capture the structure of the dataset, but this is not the current concern: we assume we already found a reasonable choice for those parameters. We add a small L2-regularization that ensures that the radius $r$ in Inequality \eqref{radeupper} is finite. The precise value of $r$ plays a secondary role as the behavior of the curve from Figure \ref{Rademachercurve} remains the same. 

Looking at Figure \ref{Rademachercurve} we observe that for $\mu$ smaller than $0.1$ the curve still drops steeply, while after $0.2$ it starts to flatten out. We thus plot the resulting kernels for $\mu=0.02$ and $\mu=0.2$ in Figure \ref{circles_manifold}. We plot the isolines of the kernel around the point of class one, the red dot in the figure. We indeed observe that for $\mu=0.02$ we don't capture that much structure yet, while for $\mu=0.2$ the two concentric circles are almost completely separated by the kernel. If this procedure indeed elevates to a practical method needs further empirical testing.
\begin{figure}[t]
\subfloat[$\mu=0.02$]{\includegraphics[width=0.48\linewidth]{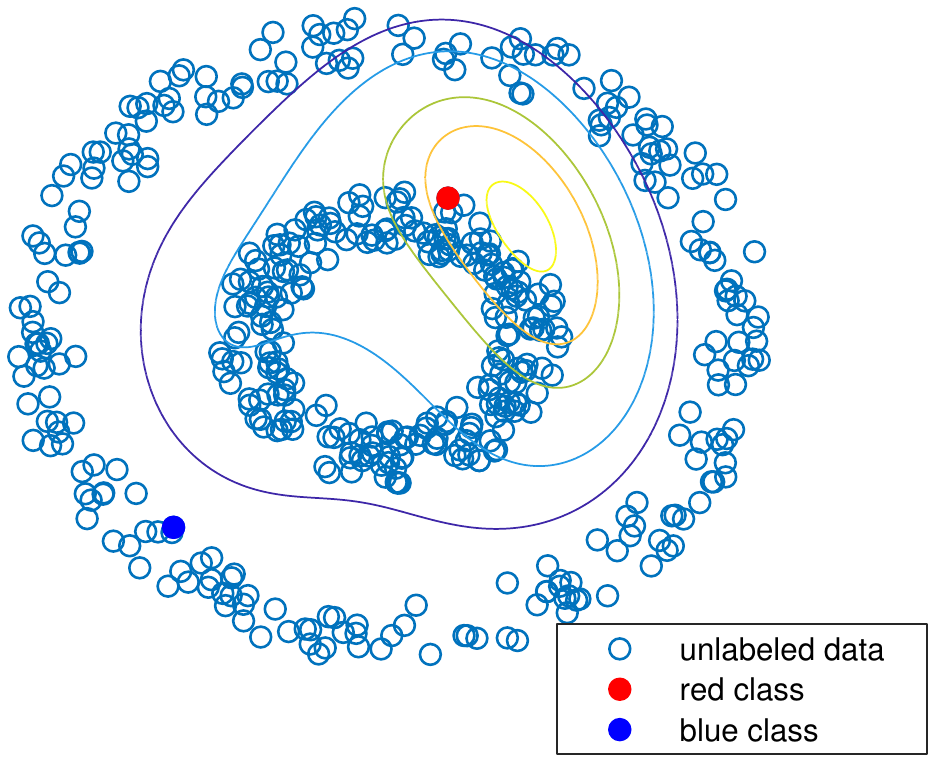}}
\subfloat[$\mu=0.2$]{\includegraphics[width=0.48\linewidth]{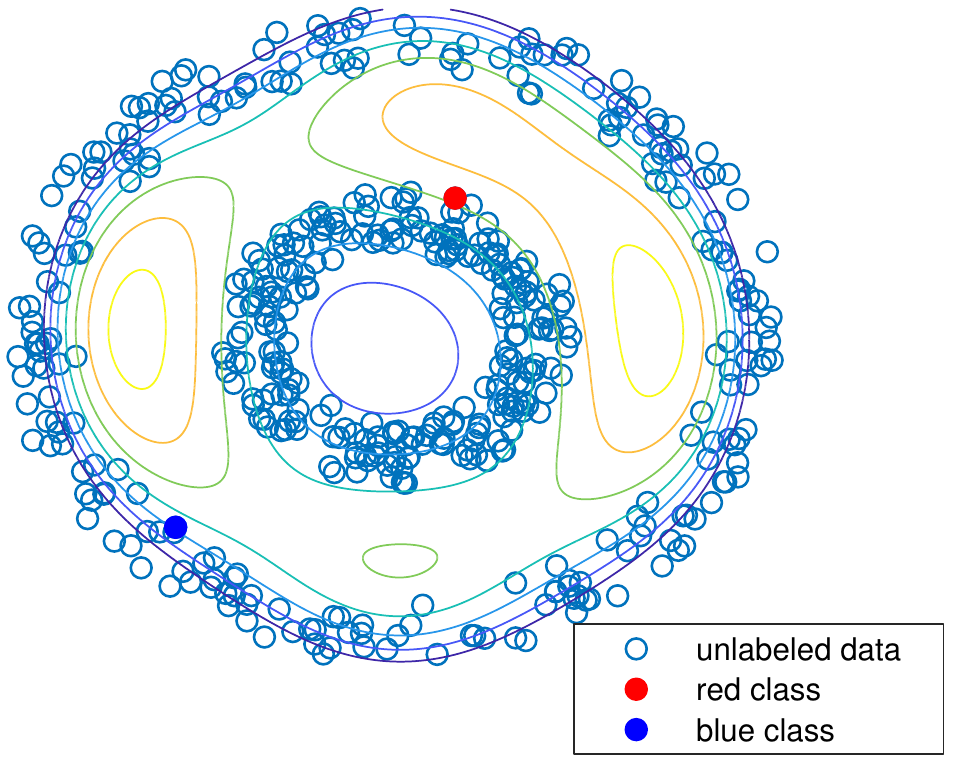} }
\caption{The resulting kernel when we use manifold regularization with parameter $\mu$ set to $0.02$ and $0.2$.}
\label{circles_manifold}
\end{figure}
\section{Discussion and Conclusion} \label{D}
This paper analysed improvements in terms of sample or Rademacher complexity for a certain class of SSL.
The performance of such methods depends both on how the approximation error of the class $\mathcal{F}$ compares to that of $\mathcal{F}^{\psi}_{\tau}$ and on the reduction of complexity by switching from the first to the latter. In our analysis we discussed the second part. The first part depends on a notion the literature often refers to as a \emph{semi-supervised assumption}. This assumption basically states that we can learn with $\mathcal{F}^{\psi}_{\tau}$ as good as with $\mathcal{F}$. Regarding our example of the two concentric circles, this would mean that each circle actually corresponds to a class. Without prior knowledge, it is unclear whether one can test efficiently if the assumption is true or not.  Or is it possible to treat just this as a model selection problem?  The only two works we know that provide some analysis in this direction are from \cite{AP}, which discusses the sample consumption to test the so-called cluster assumption, and \cite{Azizyan}, which analyzes the overhead of cross-validating the hyper-parameter coming form their proposed semi-supervised approach.

As some of our settings need restrictions, it is natural to ask whether we can extend the results. First, Lemma \ref{Lemma1} restricts us to convex optimization problems. If that assumption would be unnecessary, one may get interesting extensions. Neural networks, for example, are typically not convex in their function space and we cannot guarantee the fast learning rate from Theorem \ref{UpperNN}. But maybe there are semi-supervised methods that turn this space convex, and thus could achieve fast rates. 
In Theorem \ref{UpperNN} we have to restrict the loss to be the square loss, and \cite[Example 21.16]{Anthony} shows that for the absolute loss one cannot achieve such a result. But whether it is possible for the hinge loss, which is a typical choice in classification, is unknown to us. Corollary \ref{corlimits} considers regression and one can wonder if similar results hold for classification, e.g. when we use the hinge loss. We speculate that this is indeed true, as at least the related classification tasks, that use the $0-1$ loss, cannot achieve a rate faster than $\frac{1}{\epsilon}$ \citep[Theorem 6.8]{Shalev}.

Finally, we sketch a scenario in which sample complexity improvements of MR can be at most a constant over their supervised counterparts, ignoring logarithmic factors. This may sound like a negative result, as other methods that seem to have similar assumptions can achieve learning rates that are exponential in the number of labeled samples \citep[Chapter~6]{review}. But constant improvement can still have significant effects, if this constant can be arbitrarily large. For that consider again the example of the two concentric circles. If we set the regularization parameter $\mu$ high enough, the only possible classification functions will be the one that classifies each circle uniformly to one class, while the pseudo-dimension of the supervised model can be arbitrarily high, and thus also the constant in Corollary \ref{corlimits}. In conclusion, one should realize the significant influence constant factors in finite sample settings can have.

\bibliography{references}
\bibliographystyle{plainnat}

\end{document}